
\documentclass{article}

\usepackage{microtype}
\usepackage{graphicx}
\usepackage{subcaption}
\usepackage{booktabs} 

\usepackage{hyperref}



\usepackage[preprint]{icml2026}


\usepackage{amsmath}
\usepackage{amssymb}
\usepackage{mathtools}
\usepackage{amsthm}
\usepackage{enumitem}
\usepackage{tabularx}
\usepackage{multirow}
\usepackage{wrapfig}
\usepackage{pifont}
\usepackage{soul}
\usepackage[normalem]{ulem}

\usepackage[capitalize,noabbrev]{cleveref}

\theoremstyle{plain}
\newtheorem{theorem}{Theorem}[section]
\newtheorem{proposition}[theorem]{Proposition}

\theoremstyle{definition}

\theoremstyle{remark}


\newcommand{\blue}[1]{\textcolor{blue}{#1}}
\newcommand{\orange}[1]{\textcolor{orange}{#1}}
\newcommand{\magenta}[1]{\textcolor{orange}{#1}}

\usepackage[textsize=tiny]{todonotes}

\icmltitlerunning{Unrolled Networks are Conditional Probability Flows in MRI Reconstruction}

\begin{document}

\twocolumn[
  \icmltitle{Unrolled Networks are Conditional Probability Flows in MRI Reconstruction}



  \icmlsetsymbol{equal}{*}

  \begin{icmlauthorlist}
    \icmlauthor{Kehan Qi}{sbu}
    \icmlauthor{Saumya Gupta}{sbu}
    \icmlauthor{Xiaoling Hu}{hm}
    \icmlauthor{Qingqiao Hu}{sbu}
    \icmlauthor{Weimin Lyu}{sbu}
    \icmlauthor{Chao Chen}{sbu}
  \end{icmlauthorlist}

  \icmlaffiliation{sbu}{Stony Brook University, Stony Brook, NY, US}
  \icmlaffiliation{hm}{Harvard Medical School, Boston, MA, US}

  \icmlcorrespondingauthor{Kehan Qi}{kehan.qi@stonybrook.edu}

  \icmlkeywords{Machine Learning, ICML}

  \vskip 0.3in
]



\printAffiliationsAndNotice{}  

\begin{abstract}
Unrolled networks have been widely used for Magnetic Resonance Imaging (MRI) reconstruction due to their efficiency. However, they typically exhibit unstable output quality across cascades, resulting in sub-optimal final reconstruction results.
In this work, we address this inherent limitation of unrolled networks, drawing inspiration from recent Flow Matching paradigm. We first theoretically prove that unrolled networks are discretizations of conditional probability flows. This connection shows that 
unrolled networks and Flow Matching are analogous in MRI reconstruction.
Building upon this insight, we propose FLow-Aligned Training (FLAT), which (1) derives important cascade parameters from the Flow Matching discretization; and (2) aligns intermediate reconstructions with the ideal Flow Matching trajectory to improve cascade iteration stability and convergence.
Experiments on three MRI datasets show that FLAT results in a stable trajectory across sub-networks, improving the quality of the final reconstruction. 
\end{abstract}

\section{Introduction}

Reconstruction is a critical task in Magnetic Resonance Imaging (MRI), in which one takes the undersampled frequency-domain (also called \textit{k}-space) data as input and recovers the corresponding high-quality image~\cite{donoho2006compressed, lustig2008compressed}. \mbox{\cref{fig:teaser}(I)} illustrates the task. This is an ill-posed problem, and classic methods such as Dictionary Learning~\cite{wang2013dictionary} have been proposed to solve it. In recent years, deep learning methods have demonstrated superior performance for MRI reconstruction. In particular, unrolled networks~\cite{sun2016deep, aggarwal2018modl, zhang2018ista, sriram2020end, schlemper2017deep, aghabiglou2021projection} have become one of the most successful approaches. 
An unrolled network consists of a series of sub-networks called cascades. Each cascade 
takes the output of the previous cascade as input, and generates a new estimation of the image. In theory, a cascade corresponds to an unrolled iteration of a classical algorithm such as first-order optimization methods~\cite{zhang2018ista} or ADMM~\cite{sun2016deep}. It is supposed to push the estimate one step closer to the final result. By solving a sequence of smaller reconstruction subproblems rather than attempting complete recovery in a single step, unrolled networks achieve promising reconstruction quality. 

\begin{figure*}
    \centering
    \centerline{\includegraphics[width=1.\linewidth]{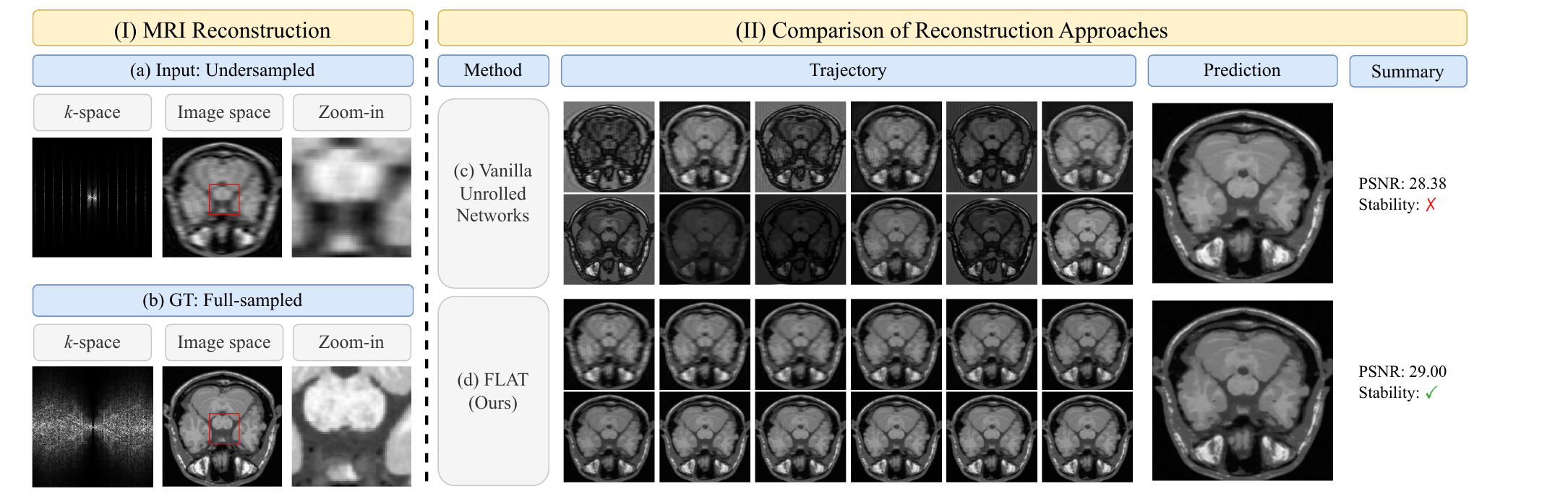}}
    \caption{
    (I) Illustrating the MRI reconstruction task: from an undersampled, aliased input (a), the task is to recover the clean, fully-sampled image (b). 
    (II) Comparison of reconstruction approaches: (c) Vanilla unrolled networks suffer from unstable image quality through the iterative cascades, impacting the final reconstruction performance.
    (d) Our FLAT, grounded in probability flow ODEs, maintains high-quality image prediction through the cascades, resulting in high reconstruction quality. 
    The `Trajectory' displays the sequential outputs of 12 cascades of the unrolled network, arranged left to right and top to bottom.
    }
    \label{fig:teaser}
\end{figure*}

Despite their widespread adoption, in practice, unrolled networks do not behave as expected. As illustrated in~\mbox{\cref{fig:teaser}(II)(c)}, popular methods~\cite{zhang2018ista,sriram2020end} generate reconstructions of oscillating quality across cascades, with even later stages sometimes generating poor results. 
This is contrary to the aforementioned theoretical expectation that the cascades approximate an optimization and iteratively improve image quality. This also raises the question: are these unrolled networks achieving the best we can do for reconstruction? Oscillating image quality puts a lot of burden on cascades at later stages, as they use low-quality images as input and are supposed to improve the quality significantly. 

The core issue of such uncontrollable cascade behavior stems from the lack of proper supervision for the intermediate cascades.
With only the final cascade's output being compared with the ground truth high-quality image, it is very hard to control the training and ensure intermediate cascades learn to improve the image quality gradually. One can potentially use the final target image to supervise all cascades \cite{chen2025comprehensive}. But this will not be sufficiently effective, because we are giving the intermediate cascades unrealistic goals. 
Furthermore, without proper expectations for different cascades, some important parameters, e.g., the step size, are very hard to optimize for different cascades.

In this paper, we address the inherent limitations of unrolled networks by drawing inspiration from the recent Flow Matching paradigm~\cite{lipman2022flow, chen2023probability, song2020score, qin2025reversing,luo2025unsupervised,luo2025upmri,zhang2024mutli}. Flow Matching characterizes data generation as a continuous probability path. By explicitly controlling intermediate timesteps along this trajectory, one can train an iterative method to follow the probability transport path.
Guided by this perspective, we revisit the image reconstruction task and demonstrate that it can be formulated as an energy-based conditional probability flow that maps undersampled images to high-quality reconstructions. Within this framework, an unrolled network can be viewed essentially as a discretization of such a flow.

This connection bridges unrolled networks and Flow Matching: while unrolled networks take discrete steps, Flow Matching describes the underlying path as a continuous trajectory. This unification has two key implications: (i) intermediate supervision can be enforced by aligning the unrolled cascades with the ideal trajectory, following standard practices in Flow Matching; and (ii) critical parameters, such as step size, can be explicitly formulated using Flow Matching theory. This allows us to train a flow-aligned unrolled network with guaranteed stability, ensuring that successive cascades monotonically improve image quality toward an optimal reconstruction.

Tying it all together, we hereby propose FLow-Aligned Training (FLAT) for unrolled networks. FLAT is a training framework that constrains unrolled network parameters based on Flow Matching discretization, and provides intermediate supervision along the optimal Flow Matching path. By grounding unrolled network training in ODE theory, FLAT improves intermediate predicted image quality and enables better control over intermediate updates.
As shown in~\mbox{\cref{fig:teaser} (II)(d)}, FLAT achieves high reconstruction quality, with intermediate cascades iteratively improving image quality in a stable and monotonic manner. 
We evaluate FLAT on three public MRI datasets: Brainweb~\cite{cocosco1997brainweb}, MRBrainS13~\cite{mendrik2015mrbrains}, and fastMRI~\cite{zbontar2018fastmri}. Experiments show that FLAT successfully brings Flow Matching's theoretical superiority to unrolled networks and outperforms existing methods, achieving better reconstruction quality and significantly improving intermediate steps compared to vanilla unrolled networks. In summary, our contributions are as follows:
\begin{itemize}
    \item We connect Flow Matching with unrolled networks in the MRI reconstruction task by theoretically proving that unrolled networks are discretizations of conditional probability flows, establishing a fundamental correspondence between these two paradigms.
    \item We propose FLow-Aligned Training (FLAT) to train unrolled networks, which enforces an ODE-consistent cascade scheduling, grounds important parameters (such as step sizes and weighting terms) through the Flow Matching theory, and adds intermediate supervision to align intermediate cascades' output with the ideal Flow Matching evolving trajectory.
    \item Extensive experiments demonstrate that FLAT improves the stability of the cascading iterative process and achieves superior reconstruction results of unrolled networks.
\end{itemize}

\section{Related Work}
\label{sec:related_work}

\textbf{Deep Learning-based MRI Reconstruction.} Inspired by iterative optimization algorithms, unrolled networks such as ADMM-Net~\cite{sun2016deep}, MoDL~\cite{aggarwal2018modl}, Cascaded U-Net~\cite{aghabiglou2021projection}, and E2E-VarNet~\cite{sriram2020end} unfold iterative solvers into trainable cascades that interleave learned regularization and data consistency. Transformer-based architectures~\cite{huang2022swin, guo2023reconformer, zhou2023dsformer} have been introduced to better capture long-range dependencies across image and \textit{k}-space domains. Recently, state-space models (SSMs) such as Mamba have been adapted to MRI reconstruction to combine long-range context modeling with linear-time complexity~\cite{korkmaz2025mambarecon, meng2025dh, ji2024deform, zou2025mmr, joo2025aespa}. Finally, diffusion models for accelerated MRI~\cite{xie2022measurement, cao2024high, gungor2023adaptive} established the stochastic differential equation (SDE) plus data consistency paradigm.

\textbf{Flow Matching for Image Generation.} Flow Matching links reverse SDE sampling and ODE transport~\cite{song2020score}. Earlier works~\cite{liu2022flow, albergo2022building, tong2023improving} train Continuous Normalizing Flows (CNFs) to learn maps between two data distributions. Later works such as PixelFlow~\cite{chen2025pixelflow}, HiFlow~\cite{bu2025hiflow}, STARFlow~\cite{gu2025starflow} and ResFlow~\cite{qin2025reversing} focus on image synthesis in specific domains. Additionally, \citet{yazdani2025flow} introduces Flow Matching in medical image synthesis, utilizing flow-based training for faster and higher-quality medical image generation. Recently, researchers have started to leverage the advantages of Flow Matching in MRI reconstruction~\cite{zhang2024mutli,luo2025unsupervised,luo2025upmri}. 

\section{Method}

\begin{figure*}
    \centering
    \includegraphics[width=1.0\linewidth]{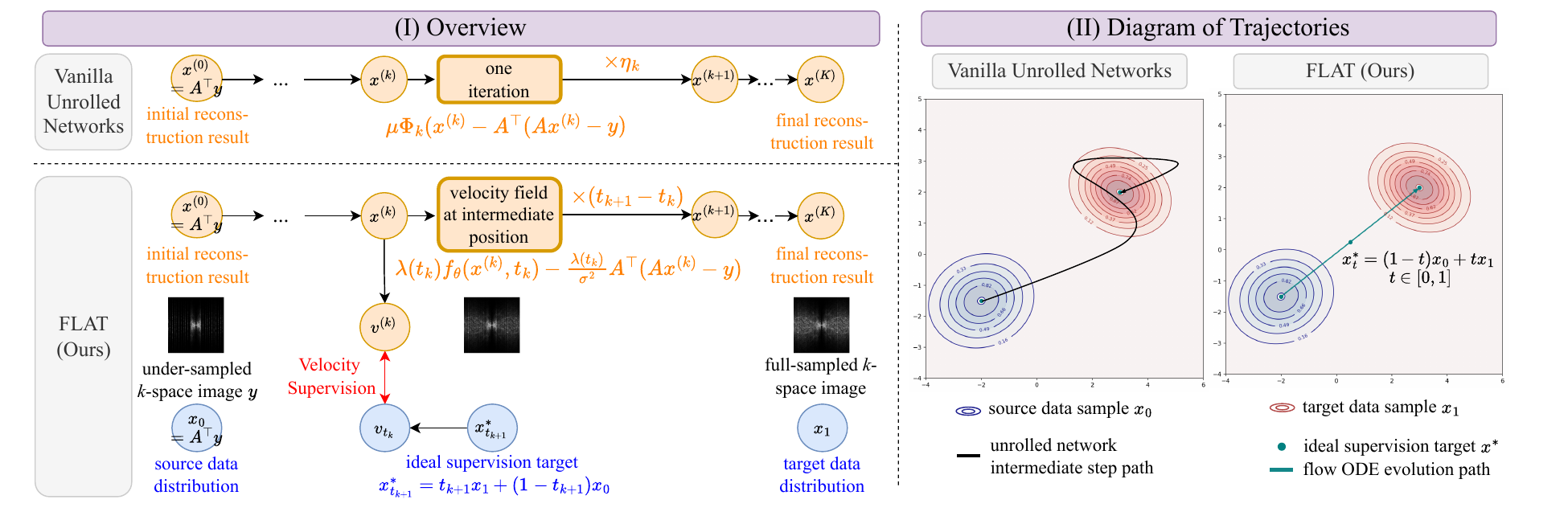}
    \caption{
    (I) Vanilla unrolled networks vs.~our FLow-Aligned Training (FLAT). Vanilla unrolled networks iteratively refine reconstructions step-by-step with supervision only at the final output. Our theory reformulates unrolling (\magenta{orange}) as a discretized flow ODE (\blue{blue}); in FLAT, each step predicts a velocity field, with intermediate supervision that aligns predicted and ideal velocities. (II) Trajectory comparison. Without intermediate supervision, vanilla unrolled networks exhibit unstable (oscillatory) image quality trajectories that ``under-run'' or ``overshoot'' the target. FLAT supervises intermediate steps to follow stable, straight-line paths guided by flow ODE theory.
    }
    \label{cflownet}
\end{figure*}

We aim to connect Flow Matching with unrolled networks in the MRI reconstruction task. We do this by establishing the theoretical correspondence between first-order gradient-based unrolled networks and energy-based conditional probability flow. 


\subsection{Preliminaries}
\label{subsec:prelim}
\textbf{MRI Sampling.} 
We express the Compressed Sensing (CS)-based MRI acquisition process in \textit{k}-space as,
\begin{equation}
y = Ax + \epsilon
\label{mr_sampling}
\end{equation}
where $x$ denotes the 2-D original image in \textit{k}-space, $y$ represents the 2-D observed (undersampled) \textit{k}-space image, $A$ is the sampling matrix which is known a priori in the MRI reconstruction task, and $\epsilon \sim \mathcal{N}(\mu_e, \Sigma_e^2)$ is Gaussian noise. For simplicity, we assume $\epsilon \sim \mathcal{N}(0, \Sigma_e^2)$. 
In conventional CS-based MRI reconstruction, the goal is to recover $x$ by solving the following optimization problem:
\begin{equation}
\hat{x} = \arg \min_x \frac{1}{2} \lVert Ax - y \rVert^2 + \nu \Psi(x)
\label{mr_iterative_optimization}
\end{equation}
where $\hat{x}$ is the estimated clean image in \textit{k}-space, $\Psi(x)$ denotes a sparsity-inducing regularization term, and $\nu$ is the hyperparameter that controls the level of regularization. The standard deviation term $\Sigma_e$ has been simplified, because it differed from the objective function~\cref{mr_iterative_optimization} by only a constant coefficient. 
All these variables, including $x, y, A, \epsilon$, are in \textit{k}-space. 

\textbf{Unrolled Networks for MRI Reconstruction.} Unrolled networks are well-equipped to address the optimization problem in~\cref{mr_iterative_optimization}. In this work, we focus on unrolled networks that utilize first-order gradients in optimization. 


Let $x^{(0)}=A^\top y$ denote the initial reconstruction (i.e., the undersampled observation) and $x^{(K)}$ the final reconstruction result after $K$ iterations. Then, the $k^{th}$ iteration is formulated as:
\begin{equation}
x^{(k+1)} = x^{(k)} -  \underbrace{\eta_kA^\top  (Ax^{(k)} - y)}_{\textit{data consistency}} + \underbrace{\eta_k\mu \Phi_k(x^{(k)})}_{\textit{regularization}} 
\label{unrolled_mr_recon}
\end{equation}
where $x^{(k)}$ is the network output at iteration $k$ (with $k$ increasing from $0$ to $K-1$), $\eta_k$ is the step size, and $\Phi_k(\cdot)$ is a proximal regularization block implemented with convolutional neural networks (CNN). This formulation is widely used in first-order gradients based on unrolled networks~\cite{zhang2018ista,sriram2020end}. Note that we formalize the $\Psi(\cdot)$ term in \textit{k}-space.
In most unrolled networks, the step size term $\eta_k$ is a learnable parameter~\cite{sun2016deep,zhang2018ista,sriram2020end}, and the ideal intermediate outputs are unknown during training. In such cases, supervision is only applied to the final cascade, and no explicit supervision is applied to the intermediate outputs, leaving them under-constrained. Some studies focus on empirical analysis of step size design~\cite{chen2025comprehensive}, but are not sufficient enough because they set up unrealistic goals for intermediate steps and lack a theoretical foundation. 

\textbf{Flow Matching Based Image Generation.}
Flow Matching is a set of generative models based on Ordinary Differential Equation (ODE). They provide a continuous pathway that smoothly transitions from one distribution to another. In this work, we take Rectified Flow~\cite{liu2022flow} as the study objective, and other Flow Matching methods follow a similar pattern. Consider two image distributions $\pi_0$ (the source distribution, typically noise or low-resolution images) and $\pi_1$ (the target distribution, real or high-quality images). Rectified Flow learns a time-dependent vector field $v$ that can be used to construct a time-dependent path (called \textit{flow}) to transport samples from $\pi_0$ to $\pi_1$. Let $\{x_t \}_{t \in [0,1]}$ denote the path of a sample under this flow, defined by the ordinary differential equation $ \frac{dx_t}{dt}=v(x_t, t)$ with $x_0 \sim \pi_0$ and $x_1 \sim \pi_1$. 

In practice, $v$ is learned using a neural network $v_\theta$, and trained so that its trajectory aligns with a simple, linearly parameterized path from the source to the target. To this end, several works~\cite{liu2022flow, chen2025pixelflow, yazdani2025flow} supervise $v_\theta$ against the constant straight-line interpolation of velocity:
\begin{equation}
    x_t=tx_1+(1-t)x_0 \implies \frac{dx_t}{dt}= x_1 - x_0,
    \label{eq:ideal_trajectory}
\end{equation}
by minimizing a time-averaged least-squares objective: $\min_{\theta} \int_0^1 \mathbb{E}\left[\lVert (x_1 - x_0) - v_\theta(x_t, t) \rVert_2^2 \right] dt$. This objective encourages the learned vector field to point along the linear direction from $x_0$ toward $x_1$ at every intermediate state $x_t$.

\subsection{A Conditional Probability Flow Perspective of Unrolled Networks in MRI Reconstruction}
\label{ode_perspective_unrolled_net}
We now state and prove our main connection between first-order gradient-based unrolled networks and energy-based conditional probability flows for MRI reconstruction.

\begin{proposition}[]
\label{thm:main}
    Each cascade of a first-order gradient-based unrolled network can be viewed as one discrete step along conditional probability flow's continuous trajectory.
\end{proposition}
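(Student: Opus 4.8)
The plan is to realize the unrolled update \cref{unrolled_mr_recon} as a forward-Euler discretization of a gradient-flow ODE attached to the conditional posterior of the reconstruction problem, and then to identify that ODE as the energy-based conditional probability flow in the sense of \cref{subsec:prelim}.

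First I would package the CS objective \cref{mr_iterative_optimization} into an energy functional $E(x;y) = \tfrac12\lVert Ax-y\rVert^2 + \nu\Psi(x)$ and attach to it the Gibbs density $p(x\mid y)\propto e^{-E(x;y)}$; because the forward operator $A$ is known a priori, this is genuinely a \emph{conditional} law given the observation $y$, and it is exactly the posterior whose mode is the CS estimate $\hat x$ (the noise covariance $\Sigma_e$ contributes only the overall constant already dropped in \cref{mr_iterative_optimization}). Its negative-gradient flow $\dot x_t = -\nabla_x E(x_t;y) = -A^\top(Ax_t-y) - \nu\nabla\Psi(x_t)$ is the zero-temperature limit of the Langevin dynamics whose Fokker--Planck stationary law is $p(\cdot\mid y)$; I would call this the conditional probability flow ODE and write its velocity field as $v(x,t;y) := -A^\top(Ax-y) + \mu\,\Phi(x,t)$, with the learned proximal block read as a scaled score of the regularizer, $\mu\,\Phi(x,t)\leftrightarrow -\nu\nabla\Psi(x)$, the cascade index being absorbed into the time slot of $\Phi$.

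Second I would discretize this ODE by forward Euler on a (possibly nonuniform) schedule $\{\eta_k\}$: putting $t_k = \sum_{j<k}\eta_j$ and $x^{(k)}\approx x_{t_k}$, one step is $x^{(k+1)} = x^{(k)} + \eta_k\, v(x^{(k)},t_k;y) = x^{(k)} - \eta_k A^\top(Ax^{(k)}-y) + \eta_k\mu\,\Phi_k(x^{(k)})$, which is verbatim \cref{unrolled_mr_recon} with $x^{(0)}=A^\top y$. Thus each cascade is precisely one Euler step along the trajectory $\{x_t\}$, the $K$ cascades integrate the flow from $t_0=0$ to $t_K=\sum_k\eta_k$, and as $\max_k\eta_k\to 0$ the cascade outputs approach the continuous flow path toward $\hat x$. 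Matching the Rectified-Flow template \cref{eq:ideal_trajectory} is then immediate: the straight coupling between $x_0=A^\top y$ and $\hat x$ is a valid realization of this conditional flow with constant velocity $\hat x - x_0$, which is the ideal intermediate target that FLAT subsequently supervises.

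The main obstacle is the identification in the middle step: $\Phi_k$ is an arbitrary CNN, so equating $\mu\,\Phi_k$ with $-\nu\nabla\Psi$ (equivalently, with the conditional score of the prior) is a modeling correspondence, not an equality forced on every network; I would state it as the structural hypothesis under which the proposition holds and note that flow-aligned training is exactly the mechanism that enforces it. A secondary technical point is that, since $\Phi_k$ depends on $k$, the object being discretized is a \emph{non-autonomous} ODE $\dot x_t = v(x_t,t;y)$, so one must treat $t\mapsto v(\cdot,t;y)$ as the cascade-indexed velocity field — making the $\eta_k$ genuine ODE step sizes — rather than trying to recover a single fixed vector field.
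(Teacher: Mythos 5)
Your proposal is correct and follows essentially the same route as the paper: define the energy-based conditional density $p(x\mid y)\propto e^{-E(x;y)}$, take the flow whose velocity is the (scaled) conditional score $\nabla_x\log p(x\mid y)=-\nabla_x E(x;y)$, apply forward Euler, and match terms with \cref{unrolled_mr_recon}, identifying the CNN block $\Phi_k$ with the prior's score. The only difference is parametrization: the paper keeps a scaling $\lambda(t)$ and scale $\sigma$ explicit, so the time increments $\delta_k$ (normalized to $\sum_k\delta_k=1$) are decoupled from the learned step size via $\eta_k=\delta_k\lambda(t_k)/\sigma^2$ and $\mu=\sigma^2$, whereas you absorb $\eta_k$ directly as the time step — mathematically equivalent for the proposition (and matching the paper's experimental choice $\lambda\equiv 1$, $\sigma=1$), but the paper's form is what later grounds FLAT's schedule constraints.
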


\begin{proof}

Let the MRI reconstruction task be modeled using a conditional ODE evolving from the undersampled initialization $x_0=A^\top y$ towards the fully-sampled $x_1$:

\begin{equation}
\frac{dx_t}{dt} = v(x_t, t;y)
\label{ode}
\end{equation}
where $v(x_t,t;y)$ denotes the velocity field at intermediate position $x_t$, timestep $t$, and conditioned on the observation $y$. We first define an energy function,
\begin{equation}
    E(x;y)=\frac{1}{2\sigma^2} ||Ax-y||^2 - \log p(x)
    \label{energy_function}
\end{equation}
where $\sigma$ is a normalized scale factor, and $p(x)$ denotes an implicit prior specified through its score. This leads to the Energy Based Models (EBM)~\cite{du2019implicit}     posterior:
\begin{equation}
    p(x|y)\propto \exp(-E(x;y))
    \label{pseudo_posterior}
\end{equation}
Our conditional probability flow is then defined on this energy-based pseudo conditional probability density function:

\begin{equation}
v(x_t, t;y) = \lambda(t)\nabla_{x}\log p(x_t|y)
\label{cond_prob_ode}
\end{equation}

where $\lambda(t)$ is a time-dependent scaling factor. Decomposing \cref{cond_prob_ode} with \cref{energy_function} and \cref{pseudo_posterior} gives
\begin{equation}
v(x_t,t;y) = \lambda(t)\nabla_x \log p(x_t) - \lambda(t) A^\top (Ax_t-y)/\sigma^2
\end{equation}
Following~\citet{song2020score}, $\nabla_x\log p(x_t)$ is approximated by a neural network $f_\theta(x_t,t)$ with parameter $\theta$:
\begin{equation}
v_\theta(x_t, t;y) = \lambda(t) f_\theta(x_t, t) - \lambda(t) A^\top (Ax_t - y) / \sigma^2
\label{neural_network_ode}
\end{equation}
To solve this continuous ODE, we apply numerical discretization. Using a forward Euler step from $t_k$ to $t_{k+1}$ with $\delta_k = t_{k+1} - t_{k}$:
\begin{equation}
\begin{aligned}
x_{t_{k+1}} = x_{t_{k}} &+  
\underbrace{
\delta_k \lambda(t_k) f_\theta(x_{t_k}, t_k)
}_{\textit{pseudo }prior} \\
&- 
\underbrace{
\delta_k \lambda(t_k) A^\top (Ax_{t_k} - y) / \sigma^2
}_{\textit{data consistency}}
\label{discretized_ode}
\end{aligned}
\end{equation}

For clarity, we color-code terms from \blue{discretized conditional probability flow} and \magenta{unrolled network iteration}. 
Notice how~\blue{\cref{discretized_ode}}'s 
data consistency and pseudo prior
terms respectively correspond to the data consistency and regularization terms from~\magenta{\cref{unrolled_mr_recon}}. We thus establish the following correspondence:
\begin{equation}
\begin{aligned}
\orange{x^{(k)}} = \blue{x_{t_k}} ,
\hspace{0.5cm}
\orange{\eta_k} &= \blue{\delta_k \lambda(t_k) / \sigma^2}, \\
\orange{\mu} = \blue{\sigma^2} ,
\hspace{0.5cm}
\orange{\Phi_k(x_k)} &= \blue{f_\theta(x_{t_k}, t_k)}, \\
\orange{-\eta_kA^\top  (Ax^{(k)} - y) + \eta_k\mu \Phi_k(x^{(k)})} &= \blue{v_\theta(x_t, t;y)}
\label{eq:correspondence}
\end{aligned}
\end{equation}



\end{proof}

The mapping from~\cref{eq:correspondence} reveals the implicit connection between unrolled networks and Flow Matching. The sequence of reconstructions $\{x^{(1)}, \dots, x^{(K)}\}$ in unrolled networks 
forms a discretized trajectory approximating the continuous Flow Matching solution. This unifies the first-order gradient-based unrolled network and energy-based Flow Matching to a single theoretical framework.

This new perspective provides a continuous-time theoretical foundation for a previously discrete and empirically-driven class of models. 
From the mapping, we have the following key implications: 
\begin{enumerate}[label=\protect{\Roman*}]
    \item \textbf{Parameters in unrolled networks are grounded in Flow Matching:} Timestep $\delta_k$, step-size $\eta_k$ and weight $\mu$ are not free. This is because, from the mapping, we obtain $\sum_k\delta_k=1$, $\eta_k = \delta_k \lambda(t_k) / \sigma^2$ and $\mu = \sigma^2$. This is important: if $\eta_k$ is left free (as in methods ADMM-Net, E2E-VarNet, ISTA-Net, Cascaded U-Net, etc.), then the effective timesteps can zigzag or collapse, resulting in erratic intermediate images. 
    \item \textbf{Intermediate supervision in unrolled networks governed by Flow Matching:} We can introduce Flow Matching training paradigms for unrolled networks. Since $x^{(k)} = x_{t_k}$, supervising at intermediate steps using the appropriate Flow-Matching-consistent training targets (i.e., ideal evolving velocity at intermediate steps) aligns the unrolled network with the ideal Flow Matching evolving trajectory, thereby improving stability and convergence.
\end{enumerate}

\begin{figure*}
    \centering
    \includegraphics[width=1.\linewidth]{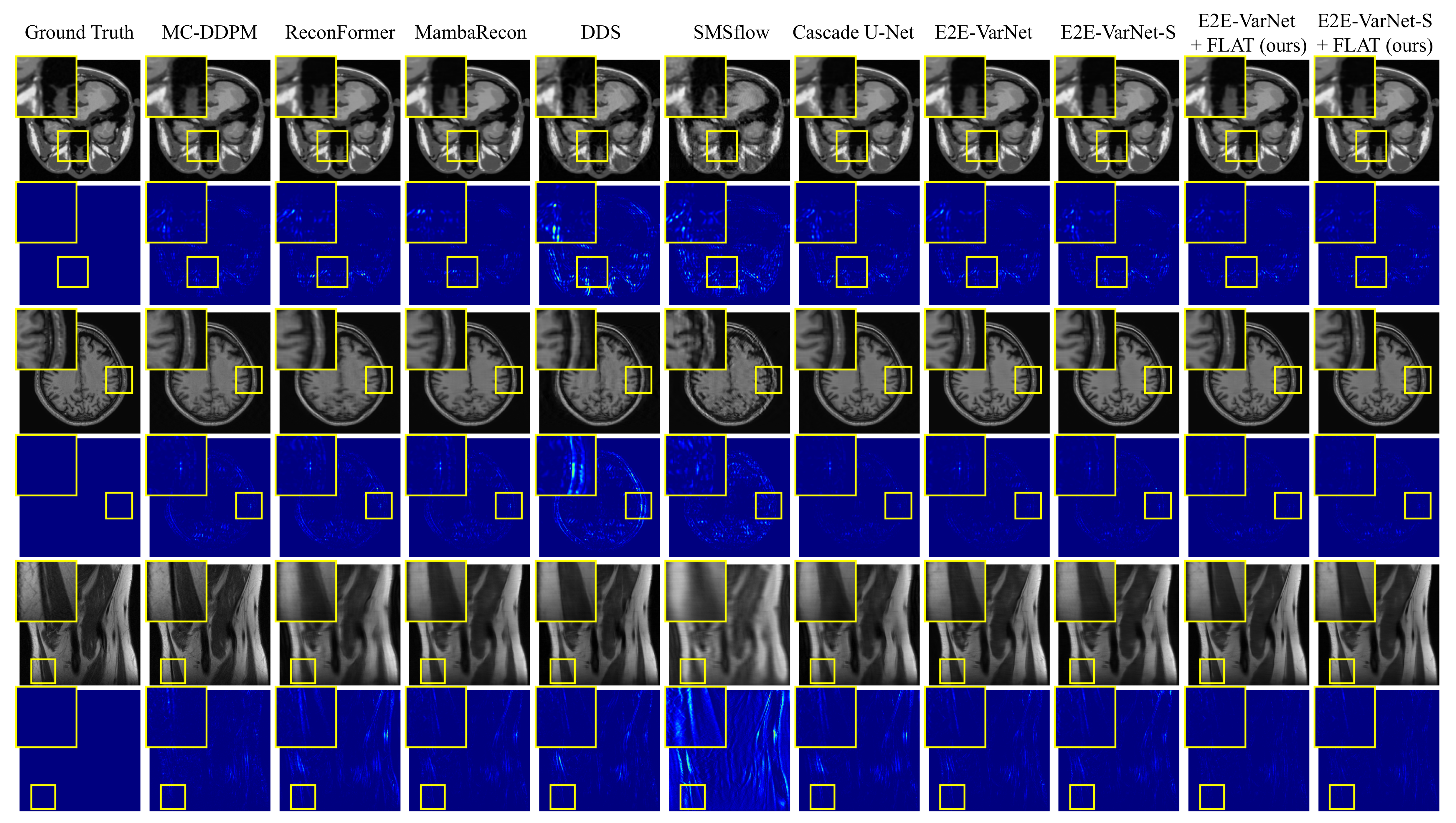}
    \caption{Qualitative results on Brainweb (rows 1-2), MRBrainS13 (rows 3-4), and fastMRI knee (rows 5-6). For each dataset, the first row shows reconstructions; the second row shows the squared-error map relative to the ground truth to visualize the error magnitude. As shown in the error map, our FLAT (the two columns to the right) achieves the best results. Yellow inset shows zoomed-in regions.
    }
    \label{fig:qualitative_results}
\end{figure*}

\subsection{FLow-Aligned Training Strategy (FLAT)}
\label{subsec:FLAT}
From the implications \uppercase\expandafter{\romannumeral 1} and \uppercase\expandafter{\romannumeral 2}, we propose FLow-Aligned Training (FLAT), which brings the Flow Matching training paradigm to unrolled networks in MRI reconstruction. FLAT \textit{(i)} enforces a time schedule by choosing a monotone sequence $\{t_k \}_{k=0}^{K}$ with $\delta_k = t_{k+1} - t_k$ and $\sum_k\delta_k=1$ and sets parameters $\eta_k$ and $\mu$ to conform to Flow Matching restrictions, and \textit{(ii)} adds intermediate supervision that aligns each cascade with the ideal flow trajectory. \cref{cflownet} illustrates the contrast between vanilla unrolled networks and FLAT.


\textbf{Scheduling and Parameters.} Following Flow Matching, we first fix the time schedule $\{ t_k \}$ to cover the full horizon $[0 \to 1]$. This induces a coherent, monotone cascade schedule. Given $\{ t_k \}$, we compute the parameters $\delta_k$, $\eta_k$ and $\mu$  directly from~\cref{eq:correspondence}.

\textbf{Velocity Alignment.} Similar to the training process of Flow Matching~\cite{liu2022flow}, we supervise the `velocity' at intermediate steps. We define the ideal discretized velocity at the $k$-th timestep, i.e. the $k$-th cascade of the unrolled network, as the discrete temporal derivative: $v_{t_k} = (x^*_{t_{k+1}} - x^{(k)}) / (t_{k+1} - t_k)$ where $x^*_{t_{k+1}}$ represents the ground truth linearly interpolated at time $t_{k+1}$, and $x^{(k)}$ is the network prediction at the $k$-th iteration. This velocity is computed using the previous iterative result as its start point. Similarly, the network's predicted velocity is: $v^{(k)} = (x^{(k+1)} - x^{(k)})/(t_{k+1} - t_k)$ where $x^{(k+1)}$ denotes the network's output at step $k+1$. The velocity alignment loss at timestep $t$ is 
formulated as: 
\begin{align}
\mathcal{L}_{\text{velocity}}(t_k) =  \lvert v_{t_k} - v^{(k)} \rvert 
\end{align}
This velocity supervision provides a strong inductive bias that guides the network to learn physically meaningful transitions between consecutive states, leading to more stable and accurate reconstruction flows. 

\textbf{Training Objective.} The complete training objective combines velocity supervision $\mathcal{L}_{\text{velocity}}$ with the standard reconstruction loss to ensure both dynamic correctness and reconstruction quality:
\begin{equation}
\mathcal{L}_{\text{FLAT}} = \mathcal{L}_{\text{reconstruction}} + w_\text{velocity} \sum_{k=0}^{K-1}\mathcal{L}_{\text{velocity}}(t_k)
\end{equation}

$L_\text{reconstruction}$ is the standard loss in the MRI reconstruction task. Our $L_\text{velocity}$ is an additional objective to stabilize the iteration process to keep it closer to the ideal one.


\section{Experiments}
\label{sec:experiments}

\begin{table*}[t]
\centering
\caption{Comparison with existing MRI reconstruction approaches. Our method FLAT achieves the statistically significant best PSNR on all three datasets, and numerically best SSIM on two datasets. The statistically significantly best performance is highlighted with \textbf{bold}, and the top-3 numerically best performances are highlighted with \underline{underline}. 
}
\label{tab:comparison}
\scriptsize
\setlength{\tabcolsep}{9pt}
\centering
\begin{tabular}{c|cc|cc|cc}
\hline
\multirow{2}{*}{Method} & \multicolumn{2}{c|}{Brainweb}                                  & \multicolumn{2}{c|}{MRBrainS13}                       & \multicolumn{2}{c}{fastMRI Knee}                   \\ \cline{2-7} 
                                 & PSNR $\uparrow$                          & SSIM $\uparrow$                           & PSNR $\uparrow$                          & SSIM $\uparrow$                  & PSNR $\uparrow$                    & SSIM $\uparrow$                     \\ \hline
MC-DDPM                          & 32.52 ± 3.9800                & 0.9194 ± 0.0348                & 28.15 ± 2.3492                & 0.8588 ± 0.0554       & 29.39 ± 3.5800          & 0.5785 ± 0.1713          \\
ReconFormer                      & 30.75 ± 3.3964                & 0.8984 ± 0.0587                & 30.22 ± 2.5339                & 0.8683 ± 0.0503       & 30.55 ± 3.4174          & 0.6691 ± 0.1398          \\
MambaRecon                       & \underline{33.25 ± 3.2946}          & 0.9219 ± 0.0764                & 28.62 ± 2.3664                & 0.8816 ± 0.0541       & 27.22 ± 3.0227          & 0.5474 ± 0.1647          \\
DDS                              & 28.00 ± 3.4112                & 0.8337 ± 0.0527                & 28.15 ± 2.4105                & 0.8242 ± 0.0603       & 30.08 ± 3.2659          & 0.6126 ± 0.1374          \\
SMSflow                          & 27.56 ± 3.3404                & 0.8037 ± 0.0786                & 27.20 ± 1.7209                & 0.8158 ± 0.0593       & 22.38 ± 0.8165          & 0.5032 ± 0.0751          \\
Cascaded U-Net                   & 31.80 ± 2.9219                & 0.9119 ± 0.0292                & 29.85 ± 2.5413                & 0.9009 ± 0.0484       & 31.01 ± 3.3856          & 0.6749 ± 0.1390          \\
E2E-VarNet                       & 32.77 ± 3.5402                & \underline{0.9320 ± 0.0314}          & \underline{32.59 ± 2.9929}          & \underline{0.9226 ± 0.0423} & \underline{31.18 ± 3.4983}    & \underline{\textbf{0.7036 ± 0.1344}} \\
E2E-VarNet-S                   & 32.81 ± 3.4072                & 0.9295 ± 0.0304                & 32.44 ± 3.0097                & 0.9188 ± 0.0439       & 31.14 ± 3.4538          & \underline{0.6915 ± 0.1372}    \\ \hline
E2E-VarNet + FLAT (Ours)         & \underline{\textbf{33.62 ± 3.3752}} & \underline{\textbf{0.9412 ± 0.0269}} & \underline{\textbf{33.44 ± 3.0523}} & \underline{0.9256 ± 0.0427} & \underline{\textbf{31.54 ± 3.6451}} & \underline{0.6868 ± 0.1479}    \\
E2E-VarNet-S + FLAT (Ours)     & \underline{33.09 ± 3.2174}          & \underline{0.9305 ± 0.0276}          & \underline{\textbf{33.23 ± 3.0539}} & \underline{0.9235 ± 0.0430} & \underline{\textbf{31.51 ± 3.6389}} & 0.6835 ± 0.1474          \\ \hline
\end{tabular}
\end{table*}

\textbf{Datasets.} We evaluate on three public MRI datasets: Brainweb~\cite{cocosco1997brainweb}, MRBrainS13~\cite{mendrik2015mrbrains}, and fastMRI knee dataset~\cite{zbontar2018fastmri}. For all datasets, we employ 1-D equispaced fraction sampling on 2-D slices with $8\times$ acceleration and center fraction $8\%$ to simulate undersampling.  Dataset details are in~\cref{appendix:dataset_description}. 


\textbf{Baselines.} We compare against following baselines: Cascaded U-Net~\cite{aghabiglou2021projection}, E2E-VarNet~\cite{sriram2020end}, E2E-VarNet-S~\cite{sriram2020end}, ReconFormer~\cite{guo2023reconformer}, MambaRecon~\cite{korkmaz2025mambarecon}, MC-DDPM~\cite{xie2022measurement}, DDS~\cite{chung2023decomposed}, and SMSflow~\cite{zhang2024mutli}. For each baseline, we use the loss terms employed in the original paper to achieve the best performance. 

\textbf{Implementation Details.}  
Though our FLAT is backbone-agnostic, we use the widely-used MRI reconstruction network E2E-VarNet~\cite{sriram2020end} as our base model. In our base model, the implementation of $v_\theta(x_{t_k}, t_k;y)$ is as follows:
\begin{align*}
    & v_\theta(x_{t_k}, t_k;y)
    = A^\top  \left(A \mathcal{F} \circ \mathcal{E} \circ f_\theta \left( \mathcal{R} \circ \mathcal{F}^{-1} (x^{(k)}) \right) - y \right) 
\label{eq:impl}
\end{align*}
where $A$ is the sampling matrix in \textit{k}-space, $\mathcal{F}$ is the Fourier Transform, $\mathcal{F}^{-1}$ is the Inverse Fourier Transform, $\mathcal{R}$ is the Root Sum-of-Squares which is the reduction of channels, $\mathcal{E}$ is the expansion of channels, and $f_\theta$ is the convolutional neural network. Following the vanilla E2E-VarNet~\cite{sriram2020end}, we have a distinct network to estimate the sensitivity map, which is used to compute $\mathcal{R}$ and $\mathcal{E}$. We try two sub-versions of this network: one does not share weights across cascades, following the vanilla configuration of E2E-VarNet; and the other shares weights, which follows a set of unrolled networks such as ISTA-Net, and is marked as `E2E-VarNet-S' in tables and figures. FLAT is conducted on both of these networks. We use the same training and loss configurations for E2E-VarNet and FLAT. Following~\citet{sriram2020end}, we use SSIM loss as $L_\text{reconstruction}$. 


\textbf{Sampling of Timesteps.} 
To supervise our $K$-step unrolled network against the continuous trajectory, we select $K+1$ discrete points. We employ a time schedule $\{t_k \}_{k=0}^{K}$ to manage these time points. Similar to~\citet{karras2022elucidating}, we use a non-linear time sequence, which is denser near $t=1$. This sequence can be either uniform or non-uniform; empirically, a non-uniform schedule denser near $t=1$ yields better performance. This provides us with a set of ideal supervisory targets $\{x^*_{t_k} \}_{k=0}^{K}$ sampled along the target flow. Specifically, we sample $\{t_k \}_{k=0}^{K}$ as:
\begin{equation}
    t_k = 1-\left(1 - k/K\right)^{(1 + \alpha)}
    \label{eq:time_sequence}
\end{equation}
where $\alpha$ is a hyperparameter controlling the density of $\{t_k \}_{k=0}^{K}$. The ideal targets $\{x_{t_k}^* \}_{k=0}^K$ are then computed using linear interpolation following~\cref{eq:ideal_trajectory}.

\textbf{Hyperparameters.} For simplicity, we fix $\lambda(t_k)=1$ and set $\sigma=1$ in \cref{discretized_ode}. We explore different choices of $\sigma$ in~\cref{appendix:additional_ablation_study}. To balance loss terms to the same numeric scale, we set $w_{\text{velocity}}=10^{-4}$. We train our network from scratch with AdamW optimizer, using a learning rate of $10^{-3}$ and a batch size of 1. We train for 200 epochs on a single NVIDIA A6000 GPU.

\textbf{Evaluation Metrics.} We evaluate using Peak Signal-to-Noise Ratio (PSNR)~\citep{hore2010image} and Structural Similarity Index (SSIM)~\citep{wang2004image} which are standard metrics for MRI reconstruction task. We also perform the unpaired t-test~\cite{student1908probable} (95\% confidence interval) to determine statistical significance of improvement. 

\subsection{Results}

\cref{tab:comparison} reports quantitative results across all datasets.
On Brainweb and MRBrainS13 datasets, FLAT achieves either the statistically significant best performance (\textbf{bold}) or one of the numerically top-3 performance (\underline{underline}). This superior performance stems from FLAT’s ODE-consistent update schedule and intermediate supervision, making each cascade contribute meaningfully to the final reconstruction. 
\begin{figure*}
    \centering
    \centerline{\includegraphics[width=1.\linewidth]{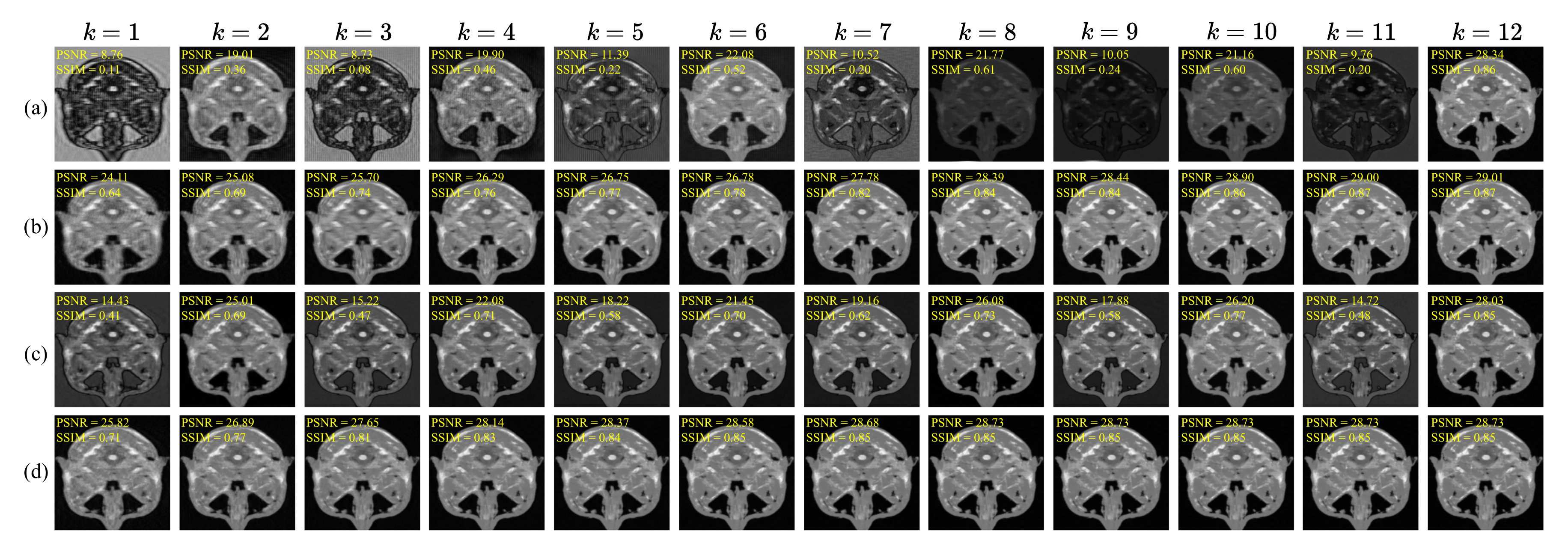}}
    \caption{
    Visualizing intermediate steps of (a) E2E-VarNet (b) E2E-VarNet + FLAT (c) E2E-VarNet-S (d) E2E-VarNet-S + FLAT. We observe an unstable trajectory in (a)(c), with half of the steps suffering from image quality degradation. In (b)(d), FLAT forces the network to approximate the ideal intermediate results similar to Flow Matching, leading to better image quality and final results.
    }
    \label{fig:intermediate}
\end{figure*}


Qualitative results in~\cref{fig:qualitative_results} mirror the quantitative findings. FLAT achieves high image quality by reducing noise and artifacts, and recovering fine anatomical details compared to baselines. The squared-error maps show low-magnitude errors for FLAT vs.~higher 
errors for baselines. 

We visualize intermediate steps in~\cref{fig:intermediate}. Vanilla E2E-VarNet has unstable cascades outputs, while FLAT maintains stable ones. 
In E2E-VarNet, outputs of some cascades ($k=1,3,5,...,11$) suffer from low image quality, causing the following cascades ($k=2,4,6,...,12$) to make less accurate predictions. E2E-VarNet-S has better iterative predictions across cascades, but still suffers from this issue. In contrast, our FLAT benefits from intermediate supervision and theoretically grounded parameter-setting, leading to a stable and increasingly better image quality across cascades. Thus, FLAT produces a higher quality final reconstruction.

\textbf{Takeaway. }These results indicate that FLAT successfully overcomes low-quality intermediate steps typical of unrolled networks. Leveraging progressively improved input images, it makes more accurate predictions at each stage. Ultimately, FLAT is able to deliver higher quality reconstructions through ODE-aligned parameter-setting and intermediate supervision. 

\subsection{Ablation Studies}
To demonstrate the efficacy of FLAT, we conduct comprehensive ablation studies on the Brainweb dataset. We analyze individual contributions of components as well as trajectory stability. Due to space constraints, we discuss impact of $w_\text{velocity}$, $\alpha$, $K$ and $\sigma$ in~\cref{appendix:additional_ablation_study}. 
We also explore the impact of FLAT on multi-coil data in~\cref{appendix:impact_of_flat_on_fastmri_multi_coil}, and a mutual mechanism on Flow Matching in~\cref{appendix:flow_matching}.

\begin{table}[ht]
\centering
\caption{Ablation of (I) (II) components in FLAT. We \textbf{bold} the best combination, which is also statistically significantly better than the base model. Each of these two components individually improves the reconstruction performance, and combining them yields the best effect, demonstrating that they interact with each other to provide even greater benefits.
}
\label{tab:ablation_component_flat}
\scriptsize
\setlength{\tabcolsep}{8pt}
\begin{tabular}{c|cc|cc}
\hline
Base Model                      & I & II & PSNR $\uparrow$                    & SSIM $\uparrow$                     \\ \hline
\multirow{4}{*}{E2E-VarNet}     & \ding{55}             & \ding{55}             & 32.68 ± 3.3535          & 0.9324 ± 0.0291          \\
                                & \checkmark          & \ding{55}             & 33.06 ± 3.3442          & 0.9361 ± 0.0282          \\
                                & \ding{55}             & \checkmark          & 33.39 ± 3.7235          & 0.9406 ± 0.0291          \\
                                & \checkmark          & \checkmark          & \textbf{33.62 ± 3.3752} & \textbf{0.9412 ± 0.0269} \\ \hline
\multirow{4}{*}{E2E-VarNet-S} & \ding{55}             & \ding{55}             & 32.64 ± 3.2147          & 0.9283 ± 0.0279          \\
                                & \checkmark          & \ding{55}             & 32.77 ± 3.2327          & 0.9291 ± 0.0274          \\
                                & \ding{55}             & \checkmark          & 32.92 ± 3.3730          & \textbf{0.9312 ± 0.0288}          \\
                                & \checkmark          & \checkmark          & \textbf{33.09 ± 3.2174} & 0.9305 ± 0.0276 \\ \hline
\end{tabular}
\end{table}

\textbf{Impact of Components.} We evaluate two key components of FLAT: (I) explicitly setting the ODE-derived parameters $\{ \delta_k \}_{k=1}^K$, $\{\eta_k\}$, and $\mu$, and (II) using intermediate (velocity) supervision. 
From~\cref{tab:ablation_component_flat}, it is clear that the combination of these two components yields the largest gain for both unshared and shared weights of unrolled networks, as it fully leverages the Flow Matching perspective. 
Note that w/o (I) and w/ (II) (the 3rd and 7th rows) do not explicitly ground parameters including $\{ \delta_k \}_{k=1}^K$, $\{\eta_k\}$, $\mu$ and leave them learnable; hence they use ground truth high-quality images as the intermediate supervision signal, which is similar to~\citet{chen2025comprehensive}. The experimental results show that though this approach can help improve performance, our FLAT achieves higher metrics, indicating the insufficient effectiveness of using the final target to supervise all cascades.

\textbf{Impact of Formulation of $\mathcal{L}_\text{velocity}$.} 
Although we use $L_1$ distance to compute the velocity loss term $\mathcal{L}_\text{velocity}$ in our main experiments, we compare alternative formulations in~\cref{tab:impact_of_form_of_velocity}, along with different values of $w_\text{velocity}$. As shown in the table, there is no statistically significant difference between $L_1$ and $L_2$ formulations, indicating that $\mathcal{L}_\text{velocity}$ is agnostic to the choice of distance metric. Similarly, performance remains stable across different values of $w_\text{velocity}$, suggesting robustness to this hyperparameter. Importantly, all combinations achieve statistically significant better performance than the base model (denoted by N/A).

\begin{table}[ht]
\centering
\caption{Impact of formulation of velocity. The last row (N/A) indicates E2E-VarNet without FLAT
, where `N/A' indicates `not available'. 
}
\label{tab:impact_of_form_of_velocity}
\scriptsize
\setlength{\tabcolsep}{6pt}
\begin{tabular}{cc|cc}
\hline
$\mathcal{L}_\text{velocity}$ formulation & $w_\text{velocity}$ & PSNR $\uparrow$                    & SSIM $\uparrow$                     \\ \hline
\multirow{3}{*}{$L_1$}                       & 1e-4                & 33.62 ± 3.3752 & 0.9412 ± 0.0269 \\
                                             & 1e-3                & 33.45 ± 3.2647          & 0.9387 ± 0.0269          \\
                                             & 1e-2                & 33.43 ± 3.3117          & 0.9375 ± 0.0277          \\ \hline
\multirow{4}{*}{$L_2$}                       & 1e-4                & 33.51 ± 3.1396          & 0.9349 ± 0.0254          \\
                                             & 1e-6                & 33.57 ± 3.2440          & 0.9357 ± 0.0269          \\
                                             & 1e-8                & 33.26 ± 3.1367          & 0.9343 ± 0.0269          \\
                                             & 1e-9                & 33.34 ± 3.3818          & 0.9382 ± 0.0278          \\ \hline
N/A                                          & 0                   & 32.77 ± 3.5402          & 0.9320 ± 0.0314          \\ \hline
\end{tabular}
\end{table}

\textbf{Impact of MRI Undersampling Level.} \cref{tab:impact_of_mri_acceleration_level} examines the impact of our approach on a variety of MRI undersampling levels. We observe that FLAT improves reconstruction performance at both $4\times$ and $8\times$ MRI acceleration, demonstrating its effectiveness across varying undersampling rates. We also observe that as the acceleration factor (acc) decreases, i.e., the quality of the undersampled image increases, the performance gain is slight, as the higher-quality input leaves less room for improvement.
\begin{table}[ht]
\centering
\caption{Impact of MRI undersampling level. The \textbf{bold} metrics indicate that it has statistical significant difference. 
}
\label{tab:impact_of_mri_acceleration_level}
\scriptsize
\setlength{\tabcolsep}{8pt}
\begin{tabular}{cccc}
\hline
Acc Level                & Method                                     & PSNR $\uparrow$                    & SSIM $\uparrow$                     \\ \hline
\multirow{4}{*}{4} & \multicolumn{1}{c|}{E2E-VarNet}            & 41.73 ± 3.5283          & 0.9861 ± 0.0084          \\
                   & \multicolumn{1}{c|}{E2E-VarNet + FLAT}     & \textbf{42.27 ± 3.5074} &0.9866 ± 0.0080    \\ \cline{2-4} 
                   & \multicolumn{1}{c|}{E2E-VarNet-S}        & 40.83 ± 3.6202          & 0.9824 ± 0.0097    \\
                   & \multicolumn{1}{c|}{E2E-VarNet-S + FLAT} & \textbf{41.15 ± 3.3139} & 0.9821 ± 0.0084          \\ \hline
\multirow{4}{*}{8} & \multicolumn{1}{c|}{E2E-VarNet}            & 32.68 ± 3.3535          & 0.9324 ± 0.0291          \\
                   & \multicolumn{1}{c|}{E2E-VarNet + FLAT}     & \textbf{33.62 ± 3.3752} & \textbf{0.9412 ± 0.0269} \\ \cline{2-4} 
                   & \multicolumn{1}{c|}{E2E-VarNet-S}        & 32.64 ± 3.2147          & 0.9283 ± 0.0279          \\
                   & \multicolumn{1}{c|}{E2E-VarNet-S + FLAT} & \textbf{33.09 ± 3.2174} & \textbf{0.9305 ± 0.0276} \\ \hline
\end{tabular}
\end{table}

\begin{figure}
    \includegraphics[width=.98\columnwidth]{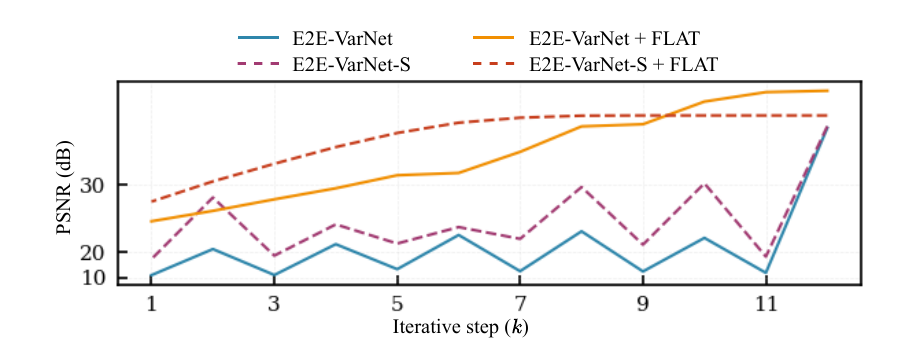}
    \caption{PSNR curves for 12-step iterations. Since intermediate cascades of unrolled networks suffer from image degradation, the PSNR curve of E2E-VarNet and E2E-VatNet-S shows a `zig-zag' pattern. Our FLAT, on the other hand, leads to a stable, increasing PSNR curve and achieves a better final PSNR value.
    }
    \label{fig:psnr_curve_for_12_steps} 
\end{figure}
\textbf{Cascade Stability.} We analyze the average reconstruction quality across iterative cascades on the test set in~\cref{fig:psnr_curve_for_12_steps}. The iterations are from step 1 (first cascade output) to step 12 (final output). The unrolled baseline achieves a final PSNR $>$30dB, but exhibits unstable, fluctuating behavior across iterations, indicating degrading quality at intermediate steps. In contrast, FLAT shows smooth, monotonic improvements, with PSNR increasing steadily throughout the cascades.

\section{Conclusion}
In this work, we introduce Flow Matching as a training paradigm to address the unstable trajectory of unrolled networks. We theoretically prove that unrolled networks are discrete implementations of conditional probability flows, establishing a direct connection between unrolled optimization and continuous-time generative dynamics. This connection reveals that effective unrolled training requires (i) parameters to be grounded in Flow Matching, and (ii) intermediate supervision. Building on this, we introduce FLow-Aligned Training (FLAT), which explicitly initializes parameters and aligns velocity at intermediate steps, analogous to Flow Matching. Across three MRI datasets, FLAT produces high-quality reconstructions and markedly more stable intermediate behavior than vanilla unrolled baselines.




\clearpage

\section*{Impact Statement}

The goal of this paper is to advance the application of machine learning in the field of MRI reconstruction. 
There are many potential societal consequences of our work, none of which we feel must be specifically highlighted here.


\bibliography{main}

@article{lustig2008compressed,
  title={Compressed sensing MRI},
  author={Lustig, Michael and Donoho, David L and Santos, Juan M and Pauly, John M},
  journal={IEEE signal processing magazine},
  year={2008},
}

@article{donoho2006compressed,
  title={Compressed sensing},
  author={Donoho, David L},
  journal={TIT},
  year={2006},
}

@inproceedings{sun2016deep,
  title={Deep ADMM-Net for compressive sensing MRI},
  author={Sun, Jian and Li, Huibin and Xu, Zongben and others},
  booktitle={NeurIPS},
  year={2016}
}

@inproceedings{zhang2018ista,
  title={ISTA-Net: Interpretable optimization-inspired deep network for image compressive sensing},
  author={Zhang, Jian and Ghanem, Bernard},
  booktitle={CVPR},
  year={2018}
}

@article{aggarwal2018modl,
  title={MoDL: Model-based deep learning architecture for inverse problems},
  author={Aggarwal, Hemant K and Mani, Merry P and Jacob, Mathews},
  journal={TMI},
  year={2018},
}

@inproceedings{sriram2020end,
  title={End-to-end variational networks for accelerated MRI reconstruction},
  author={Sriram, Anuroop and Zbontar, Jure and Murrell, Tullie and Defazio, Aaron and Zitnick, C Lawrence and Yakubova, Nafissa and Knoll, Florian and Johnson, Patricia},
  booktitle={MICCAI},
  year={2020},
}

@inproceedings{xie2022measurement,
  title={Measurement-conditioned denoising diffusion probabilistic model for under-sampled medical image reconstruction},
  author={Xie, Yutong and Li, Quanzheng},
  booktitle={MICCAI},
  year={2022},
}

@article{cao2024high,
  title={High-frequency space diffusion model for accelerated MRI},
  author={Cao, Chentao and Cui, Zhuo-Xu and Wang, Yue and Liu, Shaonan and Chen, Taijin and Zheng, Hairong and Liang, Dong and Zhu, Yanjie},
  journal={TMI},
  year={2024},
}

@inproceedings{ho2020denoising,
  title={Denoising diffusion probabilistic models},
  author={Ho, Jonathan and Jain, Ajay and Abbeel, Pieter},
  booktitle={NeurIPS},
  year={2020}
}

@inproceedings{lipman2022flow,
  title={Flow matching for generative modeling},
  author={Lipman, Yaron and Chen, Ricky TQ and Ben-Hamu, Heli and Nickel, Maximilian and Le, Matt},
  booktitle={ICLR},
  year={2023}
}

@inproceedings{chen2023probability,
  title={The probability flow ode is provably fast},
  author={Chen, Sitan and Chewi, Sinho and Lee, Holden and Li, Yuanzhi and Lu, Jianfeng and Salim, Adil},
  booktitle={NeurIPS},
  year={2023}
}

@inproceedings{song2020score,
  title={Score-based generative modeling through stochastic differential equations},
  author={Song, Yang and Sohl-Dickstein, Jascha and Kingma, Diederik P and Kumar, Abhishek and Ermon, Stefano and Poole, Ben},
  booktitle={ICLR},
  year={2021}
}

@inproceedings{albergo2022building,
  title={Building normalizing flows with stochastic interpolants},
  author={Albergo, Michael S and Vanden-Eijnden, Eric},
  booktitle={ICLR},
  year={2023}
}

@article{chen2025pixelflow,
  title={PixelFlow: Pixel-Space Generative Models with Flow},
  author={Chen, Shoufa and Ge, Chongjian and Zhang, Shilong and Sun, Peize and Luo, Ping},
  journal={arXiv preprint arXiv:2504.07963},
  year={2025}
}

@article{bu2025hiflow,
  title={HiFlow: Training-free High-Resolution Image Generation with Flow-Aligned Guidance},
  author={Bu, Jiazi and Ling, Pengyang and Zhou, Yujie and Zhang, Pan and Wu, Tong and Dong, Xiaoyi and Zang, Yuhang and Cao, Yuhang and Lin, Dahua and Wang, Jiaqi},
  journal={arXiv preprint arXiv:2504.06232},
  year={2025}
}

@article{gu2025starflow,
  title={STARFlow: Scaling Latent Normalizing Flows for High-resolution Image Synthesis},
  author={Gu, Jiatao and Chen, Tianrong and Berthelot, David and Zheng, Huangjie and Wang, Yuyang and Zhang, Ruixiang and Dinh, Laurent and Bautista, Miguel Angel and Susskind, Josh and Zhai, Shuangfei},
  journal={arXiv preprint arXiv:2506.06276},
  year={2025}
}

@inproceedings{qin2025reversing,
  title={Reversing Flow for Image Restoration},
  author={Qin, Haina and Luo, Wenyang and Wang, Libin and Zheng, Dandan and Chen, Jingdong and Yang, Ming and Li, Bing and Hu, Weiming},
  booktitle={CVPR},
  year={2025}
}

@article{zbontar2018fastmri,
  title={fastMRI: An open dataset and benchmarks for accelerated MRI},
  author={Zbontar, Jure and Knoll, Florian and Sriram, Anuroop and Murrell, Tullie and Huang, Zhengnan and Muckley, Matthew J and Defazio, Aaron and Stern, Ruben and Johnson, Patricia and Bruno, Mary and others},
  journal={arXiv preprint arXiv:1811.08839},
  year={2018}
}

@article{cocosco1997brainweb,
  title={BrainWeb: Online Interface to a 3D MRI Simulated Brain Database},
  author={ Cocosco, Chris A.  and  Kollokian, Vasken  and  Kwan, K. S.  and  Evans, Alan C  and  Centre, Imaging },
  journal={NeuroImage},
  year={1997},
}

@article{mendrik2015mrbrains,
  title={MRBrainS challenge: online evaluation framework for brain image segmentation in 3T MRI scans},
  author={Mendrik, Adri{\"e}nne M and Vincken, Koen L and Kuijf, Hugo J and Breeuwer, Marcel and Bouvy, Willem H and De Bresser, Jeroen and Alansary, Amir and De Bruijne, Marleen and Carass, Aaron and El-Baz, Ayman and others},
  journal={Computational intelligence and neuroscience},
  year={2015},
  publisher={Wiley Online Library}
}

@inproceedings{schlemper2017deep,
  title={A deep cascade of convolutional neural networks for MR image reconstruction},
  author={Schlemper, Jo and Caballero, Jose and Hajnal, Joseph V and Price, Anthony and Rueckert, Daniel},
  booktitle={IPMI},
  year={2017},
}

@article{aghabiglou2021projection,
  title={Projection-Based cascaded U-Net model for MR image reconstruction},
  author={Aghabiglou, Amir and Eksioglu, Ender M},
  journal={Computer Methods and Programs in Biomedicine},
  year={2021},
  publisher={Elsevier}
}

@article{huang2022swin,
  title={Swin transformer for fast MRI},
  author={Huang, Jiahao and Fang, Yingying and Wu, Yinzhe and Wu, Huanjun and Gao, Zhifan and Li, Yang and Del Ser, Javier and Xia, Jun and Yang, Guang},
  journal={Neurocomputing},
  year={2022},
  publisher={Elsevier}
}

@article{guo2023reconformer,
  title={Reconformer: Accelerated mri reconstruction using recurrent transformer},
  author={Guo, Pengfei and Mei, Yiqun and Zhou, Jinyuan and Jiang, Shanshan and Patel, Vishal M},
  journal={TMI},
  year={2023},
}

@inproceedings{korkmaz2025mambarecon,
  title={MambaRecon: MRI reconstruction with structured state space models},
  author={Korkmaz, Yilmaz and Patel, Vishal M},
  booktitle={WACV},
  year={2025},
}

@article{meng2025dh,
  title={Dh-mamba: Exploring dual-domain hierarchical state space models for mri reconstruction},
  author={Meng, Yucong and Yang, Zhiwei and Fu, Kexue and Song, Zhijian and Shi, Yonghong},
  journal={TCSVT},
  year={2025},
}

@inproceedings{zhou2023dsformer,
  title={DSFormer: A dual-domain self-supervised transformer for accelerated multi-contrast MRI reconstruction},
  author={Zhou, Bo and Dey, Neel and Schlemper, Jo and Salehi, Seyed Sadegh Mohseni and Liu, Chi and Duncan, James S and Sofka, Michal},
  booktitle={WACV},
  year={2023}
}

@inproceedings{ji2024deform,
  title={Deform-mamba network for mri super-resolution},
  author={Ji, Zexin and Zou, Beiji and Kui, Xiaoyan and Vera, Pierre and Ruan, Su},
  booktitle={MICCAI},
  year={2024},
}

@article{zou2025mmr,
  title={MMR-Mamba: Multi-modal MRI reconstruction with Mamba and spatial-frequency information fusion},
  author={Zou, Jing and Liu, Lanqing and Chen, Qi and Wang, Shujun and Hu, Zhanli and Xing, Xiaohan and Qin, Jing},
  journal={MedIA},
  year={2025},
  publisher={Elsevier}
}

@inproceedings{joo2025aespa,
  title={AeSPa: Attention-guided Self-supervised Parallel Imaging for MRI Reconstruction},
  author={Joo, Jinho and Kim, Hyeseong and Won, Hyeyeon and Lee, Deukhee and Eo, Taejoon and Hwang, Dosik},
  booktitle={CVPR},
  year={2025}
}

@article{gungor2023adaptive,
  title={Adaptive diffusion priors for accelerated MRI reconstruction},
  author={G{\"u}ng{\"o}r, Alper and Dar, Salman UH and {\"O}zt{\"u}rk, {\c{S}}aban and Korkmaz, Yilmaz and Bedel, Hasan A and Elmas, Gokberk and Ozbey, Muzaffer and {\c{C}}ukur, Tolga},
  journal={MedIA},
  year={2023},
  publisher={Elsevier}
}

@inproceedings{liu2022flow,
  title={Flow straight and fast: Learning to generate and transfer data with rectified flow},
  author={Liu, Xingchao and Gong, Chengyue and Liu, Qiang},
  booktitle={ICLR},
  year={2023}
}

@article{tong2023improving,
  title={Improving and generalizing flow-based generative models with minibatch optimal transport},
  author={Tong, Alexander and Fatras, Kilian and Malkin, Nikolay and Huguet, Guillaume and Zhang, Yanlei and Rector-Brooks, Jarrid and Wolf, Guy and Bengio, Yoshua},
  journal={TMLR},
  year={2024}
}

@inproceedings{yazdani2025flow,
  title={Flow matching for medical image synthesis: Bridging the gap between speed and quality},
  author={Yazdani, Milad and Medghalchi, Yasamin and Ashrafian, Pooria and Hacihaliloglu, Ilker and Shahriari, Dena},
  booktitle={MICCAI},
  year={2025},
}

@article{wang2004image,
  title={Image quality assessment: from error visibility to structural similarity},
  author={Wang, Zhou and Bovik, Alan C and Sheikh, Hamid R and Simoncelli, Eero P},
  journal={TIP},
  year={2004},
}

@inproceedings{hore2010image,
  title={Image quality metrics: PSNR vs. SSIM},
  author={Hore, Alain and Ziou, Djemel},
  booktitle={ICPR},
  year={2010},
}

@article{wang2013dictionary,
  title={Dictionary learning based impulse noise removal via L1--L1 minimization},
  author={Wang, Shanshan and Liu, Qiegen and Xia, Yong and Dong, Pei and Luo, Jianhua and Huang, Qiu and Feng, David Dagan},
  journal={Signal Processing},
  year={2013},
  publisher={Elsevier}
}

@article{student1908probable,
  title={The probable error of a mean},
  author={Student},
  journal={Biometrika},
  year={1908},
  publisher={JSTOR}
}

@article{chen2025comprehensive,
  title={Comprehensive Examination of Unrolled Networks for Solving Linear Inverse Problems},
  author={Chen, Yuxi and Chen, Xi and Maleki, Arian and Jalali, Shirin},
  journal={Entropy},
  year={2025},
  publisher={MDPI}
}

@inproceedings{luo2025unsupervised,
  title={Unsupervised Accelerated MRI Reconstruction via Ground-Truth-Free Flow Matching},
  author={Luo, Xinzhe and Li, Yingzhen and Qin, Chen},
  booktitle={IPMI},
  year={2025},
}

@article{luo2025upmri,
  title={UPMRI: Unsupervised Parallel MRI Reconstruction via Projected Conditional Flow Matching},
  author={Luo, Xinzhe and Li, Yingzhen and Qin, Chen},
  journal={arXiv preprint arXiv:2512.17493},
  year={2025}
}

@article{zhang2024mutli,
  title={Mutli-modal straight flow matching for accelerated MR imaging},
  author={Zhang, Daikun and Han, Qiuyi and Xiong, Yuzhu and Du, Hongwei},
  journal={Computers in Biology and Medicine},
  year={2024},
  publisher={Elsevier}
}

@inproceedings{du2019implicit,
  title={Implicit generation and modeling with energy based models},
  author={Du, Yilun and Mordatch, Igor},
  booktitle={NeurIPS},
  year={2019}
}

@inproceedings{chung2023decomposed,
  title={Decomposed diffusion sampler for accelerating large-scale inverse problems},
  author={Chung, Hyungjin and Lee, Suhyeon and Ye, Jong Chul},
  booktitle={ICLR},
  year={2024}
}

@inproceedings{karras2022elucidating,
  title={Elucidating the design space of diffusion-based generative models},
  author={Karras, Tero and Aittala, Miika and Aila, Timo and Laine, Samuli},
  booktitle={NeurIPS},
  year={2022}
}
\bibliographystyle{icml2026}

\newpage
\appendix
\onecolumn

\cref{appendix:use_of_llm} clarifies the use of Large Language Models in manuscript preparation.

\cref{appendix:dataset_description} provides detailed descriptions of the datasets used in our experiments.



\cref{appendix:additional_ablation_study} presents additional ablation exploration.

\cref{appendix:impact_of_flat_on_fastmri_multi_coil} explores the impact of FLAT on the fastMRI knee multi-coil dataset.

\cref{appendix:flow_matching} discusses the benefit of $\mathcal{L}_\text{reconstruction}$ on the training of Flow Matching.

\cref{appendix:discussion} provides additional discussion of our method.

\cref{appendix:limitations} discusses the limitations.

\section{Use of Large Language Models}
\label{appendix:use_of_llm}

We used large language models (LLMs) solely as an assistive tool for grammar refinement and improving the clarity of writing. We did not use it to find related work or for research ideation. Hence, all authors take full responsibility for the content of this paper.


\section{Dataset Description}
\label{appendix:dataset_description}

\textbf{Brainweb~\cite{cocosco1997brainweb}.} This is a publicly available MR brain image simulation tool, which provides clear, structured MR images. We synthesized 20 T1-weighted brain MR image volumes with a voxel resolution of 1 mm. Each volume consists of 362 slices. All slices are cropped to $256\times 256$. A 10/5/5 train/val/test data split was used for this dataset. The number of 2-D slices for training, validation, and testing are 3620, 1810, and 1810, respectively.

\textbf{MRBrainS13~\cite{mendrik2015mrbrains}.} This dataset consists of 20 MR imaging cases. We only use T1-weighted MR image volumes in our experiments. Each volume has a voxel size $0.96\times 0.96\times 3 \mathrm{mm}^3$, and contains 48 slices. All slices are cropped to $224\times 224$. We split this dataset into train, val, and test sets, respectively containing 5, 7, and 8 volumes. The number of 2-D slices for training, validation, and testing are 240, 336, and 384, respectively.

\textbf{fastMRI Knee~\cite{zbontar2018fastmri}.} We use the single-coil data from this dataset. To obtain ground truth data in \textit{k}-space, we only use the training set. Inside this set, there are 973 volumes in total. The number of slices in each volume ranges from 28 to 50. The in-plane resolution is $0.5\textrm{mm}\times 0.5\textrm{mm}$, and the slice thickness is $3\textrm{mm}$. 486, 195, and 292 volumes are used for training, validation, and testing, respectively. All slices are cropped to size $320\times 320$. The number of 2-D slices for training, validation, and testing are 17287, 6945, and 10510, respectively.

\section{Additional Ablation Study}
\label{appendix:additional_ablation_study}
\textbf{Impact of $w_\text{velocity}$, $\alpha$, $K$ and $\sigma$.} \cref{tab:impact_of_alpha_and_k_and_sigma} examines the impact of velocity supervision strength $w_\text{velocity}$, timestep-density factor $\alpha$, number of steps $K$ and the normalized scale factor $\sigma$ in the energy function. The first row ($w_\text{velocity}=10^{-4}$, $\alpha=4$, $K=12$, $\sigma=1$) indicates the default configuration, and is used in our main experiment. As shown in~\cref{tab:impact_of_alpha_and_k_and_sigma}, a too large $w_\text{velocity}$ value reduces the reconstruction performance. Such values will cause the training to focus too much on $\mathcal{L}_\text{velocity}$, and induce a slight ignorance of optimizing SSIM. Increasing $K$ provides more iterations for refinement, leading to improved reconstruction quality. Similarly, larger $\alpha$ values yield better image quality by concentrating more timesteps near $t=0$, where fine-scale refinement occurs. This distribution is crucial because our analysis reveals that most reconstruction steps focus on refinement rather than denoising—only a limited number of initial steps are needed to produce visually acceptable images. Therefore, a larger $\alpha$ allocates more computational resources to the refinement phase, while smaller $\alpha$ values under-utilize refinement steps, resulting in degraded image quality. For $\sigma$, we claim that in energy-based models, it is a normalization factor that works together with $\lambda(t)$ and $\delta_t$. We observe that the reconstruction metrics are insensitive to the selection of the $\sigma$ value, which is similar to what was found in~\citet{ho2020denoising}. We also observe that even if we choose the best combination, other combinations also achieve better results than the base network.

\begin{table}[ht]
\centering
\caption{Impact of hyperparameters $\alpha$, $K$, and $\sigma$ in FLAT. The first row indicates the default set of these parameters, and the last row indicates the E2E-VarNet without FLAT, where `N/A' indicates `not available'. The rest rows are split into 4 groups, each group compares one parameter with the first row. As expected, our FLAT is robust to the choice of $w_\text{velocity}$, sensitive to $\sigma$, and a larger $\alpha$ and $K$ lead to better performance. Though we only \textbf{bold} the best combination of these hyperparameters, all choices are statistically significantly better than the base network (the last row), which indicates the robustness of our FLAT.
}
\label{tab:impact_of_alpha_and_k_and_sigma}
\scriptsize
\begin{tabular}{cccc|cc}
\hline
$w_\text{velocity}$ & $\alpha$ & $K$ & $\sigma$ & PSNR $\uparrow$                    & SSIM $\uparrow$                     \\ \hline
1e-4                & 4        & 12  & 1        & 33.62 ± 3.3752 & 0.9412 ± 0.0269 \\ \hline
1e-3                & 4        & 12  & 1        & 33.45 ± 3.2647          & 0.9387 ± 0.0269          \\
1e-2                & 4        & 12  & 1        & 33.43 ± 3.3117          & 0.9375 ± 0.0277          \\ \hline
1e-4                & 0        & 12  & 1        & 33.03 ± 3.4320          & 0.9352 ± 0.0293          \\
1e-4                & -0.5     & 12  & 1        & 33.04 ± 3.4889          & 0.9359 ± 0.0295          \\ \hline
1e-4                & 4        & 8   & 1        & 33.10 ± 3.4307          & 0.9374 ± 0.0281          \\
1e-4                & 4        & 10  & 1        & 33.25 ± 3.4533          & 0.9378 ± 0.0290          \\ \hline
1e-4                & 4        & 12  & 0.25     & 33.54 ± 3.4417          & 0.9394 ± 0.0283          \\
1e-4                & 4        & 12  & 0.5      & 33.38 ± 3.3189          & 0.9380 ± 0.0278          \\
1e-4                & 4        & 12  & 2        & 33.37 ± 3.3012          & 0.9375 ± 0.0274          \\ \hline
0                   & N/A      & 12  & N/A      & 32.77 ± 3.5402          & 0.9320 ± 0.0314          \\ \hline
\end{tabular}
\end{table}

\section{Impact of FLAT on FastMRI Knee Multi-coil Dataset}
\label{appendix:impact_of_flat_on_fastmri_multi_coil}
We explore the impact of our FLAT on the fastMRI knee multi-coil dataset. This dataset consists of 973 volumes in the training set, 118 in the validation set, and 199 in the testing set. Each scan was acquired from a 3T or 1.5T clinical system. The number of slices in each volume ranges from 28 to 50. The in-plane resolution is $0.5\text{mm}\times 0.5\text{mm}$, and the slice thickness is $3\text{mm}$. The number of coils is 15. All slices are cropped to size $320\times 320$. The number of 2-D slices for training, validation, and testing are 34742, 4092, and 7135, respectively. We show the experimental results in~\cref{tab:experiment_on_fastmri_knee_multi_coil_dataset}. Similar to the performance on fastMRI knee single-coil dataset, our FLAT achieves better performance in PSNR, which proves the positive impact of FLAT on this dataset.

\begin{table}[ht]
\centering
\caption{Impact of FLAT on fastMRI knee multi-coil dataset. Similar to fastMRI knee single-coil dataset, our FLAT achieves the best PSNR and comparable SSIM.}
\label{tab:experiment_on_fastmri_knee_multi_coil_dataset}
\scriptsize
\begin{tabular}{c|cc}
\hline
Method                       & PSNR $\uparrow$                    & SSIM $\uparrow$                  \\ \hline
E2E-VarNet                   & 27.70 ± 4.7316          & 0.8046 ± 0.1033 \\
E2E-VarNet-S               & 27.24 ± 4.1562          & 0.7937 ± 0.1069       \\ \hline
E2E-VarNet + FLAT (Ours)     & 29.03 ± 4.4319          & 0.7986 ± 0.1133       \\
E2E-VarNet-S + FLAT (Ours) & \textbf{29.15 ± 4.1153} & 0.8058 ± 0.1055 \\ \hline
\end{tabular}
\end{table}

\section{Benefit of $\mathcal{L}_\text{reconstruction}$ on Flow Matching}
\label{appendix:flow_matching}
Given that the velocity loss term motivated by Flow Matching can benefit the training paradigm of unrolled networks, one can raise such a question: Is it a mutual benefit? In other words, \textit{Does the reconstruction loss of unrolled networks also benefit Flow Matching?} 

In this section, we explore this paradigm. We conduct exploration on SMSflow~\cite{zhang2024mutli} as our base Flow Matching method. This method uses the NCSN++ \cite{song2020score} as the denoising network. During training, it follows a standard Rectified Flow~\cite{liu2022flow} training paradigm. In the training process, it samples a random $t$ value between 0 and 1, computes the ideal position $x_t$ at step $t$ using linear interpolation between an undersampled zero-filling image and the corresponding full-sampled image, and predicts the `velocity' term at the timestamp $t$ and position $x_t$. The loss function $\mathcal{L}_\text{velocity}$ is computed as the $L_2$ loss between the predicted velocity and the true velocity. \cref{tab:impact_of_l_reconstruction_on_flow_matching} shows that the performance of SMSflow is limited on MRI reconstruction, implying the reason for the limited utility of this method in such tasks.

To enable an unrolled-network-style loss, at each iteration of training, we first sample the final output $\hat{x}_1$ using the network, and then compute the SSIM loss term as $\mathcal{L}_\text{reconstruction}$ on this predicted $\hat{x}_1$. This loss acts as an additional loss term during training. The final loss is a combination of vanilla velocity loss $\mathcal{L}_\text{velocity}$ and this reconstruction loss $\mathcal{L}_\text{reconstruction}$. We examine the impact of this unrolled-network-style loss term $\mathcal{L}_\text{reconstruction}$ on Flow Matching in~\cref{tab:impact_of_l_reconstruction_on_flow_matching}. This loss term benefits the Flow Matching, leading to a 2.46 dB increase in PSNR and 0.0802 increase in SSIM.

\begin{table}[ht]
\centering
\caption{Impact of unrolled-network-style loss $\mathcal{L}_\text{reconstruction}$ on Flow Matching.}
\label{tab:impact_of_l_reconstruction_on_flow_matching}
\scriptsize
\begin{tabular}{c|cc}
\hline
loss                                                            & PSNR                    & SSIM                     \\ \hline
$\mathcal{L}_\text{velocity}$                                   & 27.56 ± 3.3404          & 0.8037 ± 0.0786          \\
$\mathcal{L}_\text{velocity}+\mathcal{L}_\text{reconstruction}$ & \textbf{30.02 ± 3.1856} & \textbf{0.8839 ± 0.0317} \\ \hline
\end{tabular}
\end{table}

\section{Discussion}
\label{appendix:discussion}

\textbf{Keeping Information in Intermediate Steps. }Explicit timestep control in~\cref{discretized_ode} helps keep more information in intermediate steps, and thereby improves the accuracy of prediction in the following cascade. 
In~\cref{fig:intermediate}, for E2E-VarNet, half of the intermediate steps drop a lot of information, such as background or low-frequency information, leading to low-quality images. E2E-VarNet-S maintains better image quality than E2E-VarNet, implying its robustness in intermediate steps, but still lacks sufficient information for visualization. This suggests unstable cascade iteration in~\cref{fig:psnr_curve_for_12_steps}, where the PSNR values among cascades change dramatically for E2E-VarNet and E2E-VarNet-S. This issue hurts the final reconstruction performance. The output of the previous cascade is the input of the next cascade, and the previous cascade drops some information, so the next cascade takes imperfect data as input. This will undoubtedly harm the reconstruction of the next cascade. On the contrary, as our FLAT keeps the necessary low-frequency information in the intermediate steps, the reconstruction achieves better performance.

\textbf{Performance Gain. } The \textit{t}-test on~\cref{tab:comparison} illustrates the significant PSNR gain of our FLAT on the Brainweb and MRBrainS13 datasets. On the fastMRI knee single coil dataset, though the performance gain is not significant, our approach still achieves the best numerical PSNR. \cref{tab:impact_of_alpha_and_k_and_sigma} and~\cref{tab:impact_of_form_of_velocity} illustrate the robustness of our FLAT, where no matter what hyperparameters are used and what form of $\mathcal{L}_\text{velocity}$ is employed, our FLAT helps improve the metric numerically.

\section{Limitations} 
\label{appendix:limitations}
\textbf{Limited to First-Order Gradient-Based Unrolled Networks.} The unrolled networks are based on various iterative algorithms, such as gradient descent~\cite{zhang2018ista, sriram2020end} and Lagrange Multiplier~\cite{sun2016deep}. However, the theoretical foundation of FLAT is limited to the first-order gradient-based unrolled networks. Expanding Flow Matching to other sets of unrolled networks to resolve their inherent issues is non-trivial and challenging. We will explore this domain as our future work.

\textbf{Marginal Performance Gain on FastMRI Dataset.} Though our FLAT achieves significant PSNR improvement and best numerical SSIM on Brainweb, MRBrainS13 and the fastMRI dataset, it does not achieve the highest SSIM metric on the fastMRI dataset. The performance drop is impacted by the `trade-off' between our $\mathcal{L}_\text{velocity}$ and vanilla SSIM loss. The SSIM loss term helps optimize the SSIM value directly, but our $\mathcal{L}_\text{velocity}$ is not designed to directly optimize this metric on the complex data distribution, so the PSNR increases at the cost of a slight SSIM drop. We argue that this trade-off is meaningful if we focus more on PSNR.

\end{document}